\definecolor{mydarkred}{rgb}{0.6,0,0}
\definecolor{mydarkgreen}{rgb}{0,0.6,0}
\DeclareMathOperator*{\argmin}{\mathrm{arg\,min}}
\DeclareMathOperator*{\argmax}{\mathrm{arg\,max}}
\newcommand{\bmw}{\bm{w}}\newcommand{\bmx}{\bm{x}}
\newtheorem{theorem}{Theorem}
\newtheorem{lemma}[theorem]{Lemma}
\newcommand{\pr}{\mathrm{Pr}}
\newcommand{\bE}{\mathbb{E}}
\newcommand{\bR}{\mathbb{R}}
\newcommand{\cG}{\mathcal{G}}
\newcommand{\cH}{\mathcal{H}}
\newcommand{\cL}{\mathcal{L}}
\newcommand{\cO}{\mathcal{O}}
\newcommand{\cS}{\mathcal{S}}
\newcommand{\cX}{\mathcal{X}}
\newcommand{\fR}{\mathfrak{R}}
\newcommand{\OVA}{{\mathrm{OVA}}}
\newcommand{\PC}{{\mathrm{PC}}}
\title{Learning from Complementary Labels}
\author{
  Takashi Ishida$^\text{1,2,3}$\quad Gang Niu$^\text{2,3}$\quad Weihua Hu$^\text{2,3}$\quad Masashi Sugiyama$^\text{3,2}$\\
  $^\text{1}$ Sumitomo Mitsui Asset Management, Tokyo, Japan\\
  $^\text{2}$ The University of Tokyo, Tokyo, Japan\\
  $^\text{3}$ RIKEN, Tokyo, Japan\\
  \texttt{\{ishida@ms., gang@ms., hu@ms., sugi@\}k.u-tokyo.ac.jp}\\
}
\begin{document}

\maketitle

\begin{abstract}
Collecting labeled data is costly and thus a critical bottleneck in real-world classification tasks.
To mitigate this problem, we propose a novel setting, namely \emph{learning from complementary labels} for multi-class classification.
A complementary label specifies a class that a pattern does \emph{not} belong to.
Collecting complementary labels would be less laborious than collecting ordinary labels,
since users do not have to carefully choose the correct class from a long list of candidate classes.
However, complementary labels are less informative than ordinary labels
and thus a suitable approach is needed to better learn from them.
In this paper, we show that an \emph{unbiased estimator} to the \emph{classification risk}
can be obtained only from complementarily labeled data,
if a loss function satisfies a particular symmetric condition.
We derive \emph{estimation error bounds} for the proposed method and prove that the \emph{optimal parametric convergence rate} is achieved.
We further show that learning from complementary labels can be easily combined with \emph{learning from ordinary labels} (i.e., ordinary supervised learning),
providing a highly practical implementation of the proposed method.
Finally, we experimentally demonstrate the usefulness of the proposed methods.
\end{abstract}

\section{Introduction}
In ordinary supervised classification problems, each training pattern is equipped with a label
which specifies the class the pattern belongs to.
Although supervised classifier training is effective,
labeling training patterns is often expensive and takes a lot of time.
For this reason, learning from less expensive data has been extensively studied in the last decades, including but not limited to, semi-supervised learning \cite{semi,zhu03icml,zhou03nips,grandvalet04nips,belkin06jmlr,mann07icml,
niu13icml,li15tpami,yang16icml,kipf17iclr,laine17iclr}, learning from pairwise/triple-wise constraints \cite{pairwise_constraints,goldberger04nips,davis07icml,weinberger09jmlr,NC:Niu+etal:2014}, and positive-unlabeled learning \cite{denis98alt,elkan08kdd,ward09biometrics,blanchard10jmlr,
punips,puicml,NIPS:Niu+etal:2016,kiryo17nips}.

In this paper, we consider another weakly supervised classification scenario with less expensive data:
instead of any ordinary class label, only a \emph{complementary label} which specifies
a class that the pattern does \emph{not} belong to is available.
If the number of classes is large,
choosing the correct class label from many candidate classes is laborious,
while choosing one of the incorrect class labels would be much easier and thus less costly.
In the binary classification setup, learning with complementary labels is equivalent
to learning with ordinary labels,
because complementary label $1$ (i.e., not class $1$)
immediately means ordinary label $2$.
On the other hand, in $K$-class classification for $K>2$, complementary labels are less informative than ordinary labels because complementary label $1$ only means either of the ordinary labels $2, 3, \ldots, K$.

The complementary classification problem may be solved
by the method of learning from \emph{partial labels} \cite{partial},
where multiple candidate class labels are provided to each training pattern---complementary label $\overline{y}$ can be regarded as an extreme case of partial labels
given to all $K-1$ classes other than class $\overline{y}$.
Another possibility to solve the complementary classification problem
is to consider a multi-label setup \cite{multilabel2},
where each pattern can belong to multiple classes---complementary label $\overline{y}$ is translated into a negative label for class $\overline{y}$
and positive labels for the other $K-1$ classes.

Our contribution in this paper is to give a direct risk minimization framework
for the complementary classification problem.
More specifically, we consider a \emph{complementary loss} that incurs a large loss
if a predicted complementary label is not correct.
We then show that the classification risk can be empirically estimated in an unbiased fashion
if the complementary loss satisfies a certain symmetric condition---the sigmoid loss and the ramp loss (see Figure \ref{fig:binary_losses}) are shown to satisfy this symmetric condition.
Theoretically, we establish estimation error bounds for the proposed method,
showing that learning from complementary labels is also consistent;
the order of these bounds achieves the optimal parametric rate $\cO_p(1/\sqrt{n})$,
where $\cO_p$ denotes the order in probability and $n$ is the number of complementarily labeled data.

We further show that our proposed complementary classification can be easily combined with ordinary classification, providing a highly data-efficient classification method.  This combination method is particularly useful, e.g., when labels are collected through crowdsourcing \cite{crowdsourcing}: Usually, crowdworkers are asked to give a label to a pattern by selecting the correct class from the list of all candidate classes.  This process is highly time-consuming when the number of classes is large. We may instead choose one of the classes randomly and ask crowdworkers whether a pattern belongs to the chosen class or not. Such a yes/no question can be much easier and quicker to be answered than selecting the correct class out of a long list of candidates. Then the pattern is treated as ordinarily labeled if the answer is yes; otherwise, the pattern is regarded as complementarily labeled.

Finally, we demonstrate the practical usefulness of the proposed methods through experiments.

\section{Review of ordinary multi-class classification}
\label{sec:learningfromtruelabels}
Suppose that $d$-dimensional pattern $\bmx \in \mathbb{R}^d$ and its class label $y \in \{ 1,\ldots,K\}$ are sampled independently from an unknown probability distribution with density $p(\bmx, y)$.
The goal of ordinary multi-class classification
is to learn a classifier $f(\bmx):\mathbb{R}^d\to\{ 1,\ldots,K\}$ that minimizes the classification risk with multi-class loss $\mathcal{L}\big(f(\bmx),y\big)$:
\begin{align}
R(f) = \mathbb{E}_{p(\bmx,y)}\big[\mathcal{L}\big(f(\bmx),y\big)\big],
\label{risk}
\end{align}
where $\mathbb{E}$ denotes the expectation.
Typically, a classifier $f(\bmx)$ is assumed to take the following form:
\begin{align}
f(\bmx) = \argmax_{y\in\{1,\ldots,K\}} g_y(\bmx),
\end{align}
where $g_y(\bmx):\mathbb{R}^d\to\mathbb{R}$ is a binary classifier
for class $y$ versus the rest.
Then, together with a binary loss $\ell(z):\mathbb{R}\to\mathbb{R}$
that incurs a large loss for small $z$,
the \emph{one-versus-all} (OVA) loss\footnote{We normalize the ``rest'' loss by $K-1$ to be consistent with the discussion in the following sections.}
or
the \emph{pairwise-comparison} (PC) loss
defined as follows are used as the multi-class loss \cite{ova}:
\begin{align}
\label{oneversusall}
\mathcal{L}_{\mathrm{OVA}}(f(\bmx), y) &
= \ell\big(g_y(\bmx)\big) + \frac{1}{K-1}\sum_{y'\neq y}\ell \big(-g_{y'}(\bmx)\big),\\
\label{pairwise}
\mathcal{L}_{\mathrm{PC}}\big(f(\bmx), y\big) &
= \sum_{y'\neq y}\ell\big(g_y(\bmx)-g_{y'}(\bmx)\big).
\end{align}

Finally, the expectation over unknown $p(\bmx,y)$ in Eq.\eqref{risk} is empirically approximated using training samples to give a practical classification formulation.

\section{Classification from complementary labels}
\label{sec:classification_from_complementary_labels}
In this section, we formulate the problem of complementary classification
and propose a risk minimization framework.

We consider the situation where,
instead of ordinary class label $y$,
we are given only \emph{complementary label} $\overline{y}$
which specifies a class that pattern $\bmx$ does \emph{not} belong to.
Our goal is to still learn a classifier that minimizes
the classification risk \eqref{risk}, but only from complementarily labeled training samples
$\{(\bmx_i,\overline{y}_i)\}_{i=1}^n$.
We assume that $\{(\bmx_i,\overline{y}_i)\}_{i=1}^n$
are drawn independently from an unknown probability distribution with density:\footnote{The coefficient $1/(K-1)$ is for the normalization purpose: it would be natural to assume $\overline{p}(\bmx,\overline{y})=(1/Z)\sum_{y\neq\overline{y}} p(\bmx, y)$ since all $p(\bmx, y)$ for $y\neq\overline{y}$ equally contribute to $\overline{p}(\bmx,\overline{y})$; in order to ensure that $\overline{p}(\bmx,\overline{y})$ is a valid joint density such that $\mathbb{E}_{\overline{p}(\bmx,\overline{y})}[1]=1$, we must take $Z=K-1$.}
\begin{align}
\overline{p}(\bmx,\overline{y})=\frac{1}{K-1}\sum_{y\neq\overline{y}} p(\bmx, y).
\label{complementary-joint-density}
\end{align}

Let us consider a \emph{complementary} loss $\overline{\mathcal{L}}(f(\bmx),\overline{y})$
for a complementarily labeled sample $(\bmx,\overline{y})$.
Then we have the following theorem, which allows unbiased estimation of the classification risk from complementarily labeled samples:

\begin{theorem}\label{theorem:R(f)}
  The classification risk \eqref{risk} can be expressed as
  \begin{align}
  R(f) = (K-1)\mathbb{E}_{\overline{p}(\bmx,\overline{y})}\big[\overline{\mathcal{L}}\big(f(\bmx),\overline{y}\big)\big]
  -M_1+M_2,
  \label{risk-complementary}
  \end{align}
  if there exist constants $M_1,M_2\ge0$ such that for all $\bmx$ and $y$, the complementary loss satisfies
  \begin{align}
  \label{sum-to-one-loss}
  \sum_{\overline{y}=1}^K \overline{\mathcal{L}}\big(f(\bmx), \overline{y}\big) = M_1
~~~\mbox{ and }~~~
  \overline{\mathcal{L}}\big(f(\bmx),y\big)+\mathcal{L}\big(f(\bmx),y\big) = M_2.
  \end{align}
\end{theorem}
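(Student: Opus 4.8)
The plan is to start from the right-hand side of \eqref{risk-complementary} and unfold the definition of $\overline{p}$ until the ordinary risk $R(f)$ reappears. First I would substitute \eqref{complementary-joint-density} into the term $(K-1)\mathbb{E}_{\overline{p}(\bmx,\overline{y})}[\overline{\mathcal{L}}(f(\bmx),\overline{y})]$, writing the expectation as $\int \sum_{\overline{y}=1}^K \overline{p}(\bmx,\overline{y})\,\overline{\mathcal{L}}(f(\bmx),\overline{y})\,\mathrm{d}\bmx$. The prefactor $K-1$ cancels the normalizing constant $1/(K-1)$ in \eqref{complementary-joint-density}, leaving the double sum $\int \sum_{\overline{y}=1}^K\sum_{y\neq\overline{y}} p(\bmx,y)\,\overline{\mathcal{L}}(f(\bmx),\overline{y})\,\mathrm{d}\bmx$.

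Next I would exchange the order of the two finite summations: the index set of pairs $(\overline{y},y)$ with $\overline{y}\neq y$ is symmetric, so summing over $\overline{y}$ and then over $y\neq\overline{y}$ equals summing over $y$ and then over $\overline{y}\neq y$. This rewrites the quantity as $\int \sum_{y=1}^K p(\bmx,y)\bigl(\sum_{\overline{y}\neq y}\overline{\mathcal{L}}(f(\bmx),\overline{y})\bigr)\,\mathrm{d}\bmx$. I would then invoke the first condition in \eqref{sum-to-one-loss} to replace the inner sum $\sum_{\overline{y}\neq y}\overline{\mathcal{L}}(f(\bmx),\overline{y})$ by $M_1-\overline{\mathcal{L}}(f(\bmx),y)$. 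Since $\sum_{y=1}^K\int p(\bmx,y)\,\mathrm{d}\bmx=1$, the $M_1$ piece contributes exactly the constant $M_1$, and the remainder is $-\mathbb{E}_{p(\bmx,y)}[\overline{\mathcal{L}}(f(\bmx),y)]$; hence the whole expression so far equals $M_1-\mathbb{E}_{p(\bmx,y)}[\overline{\mathcal{L}}(f(\bmx),y)]$.

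Finally I would apply the second condition in \eqref{sum-to-one-loss}, namely $\overline{\mathcal{L}}(f(\bmx),y)=M_2-\mathcal{L}(f(\bmx),y)$, inside the remaining expectation; using $\mathbb{E}_{p(\bmx,y)}[M_2]=M_2$ together with the definition \eqref{risk} of $R(f)$, the expression collapses to $M_1-M_2+R(f)$. Rearranging gives \eqref{risk-complementary}.

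There is no serious obstacle here --- every interchange is over finite index sets and the only analytic ingredient is linearity of expectation (plus the trivial interchange of a finite sum with the integral) --- but the step that deserves the most care is the relabeling of the double sum, since an off-by-one error in the index set (forgetting that the diagonal $\overline{y}=y$ must be excluded symmetrically on both sides) would introduce a spurious $\overline{\mathcal{L}}(f(\bmx),y)$ or $M_1$ term. I would also remark that the two displayed constraints are precisely what the argument requires: the first lets one turn $\sum_{\overline{y}\neq y}$ into a constant minus a single term, and the second converts that complementary-loss term into the ordinary loss $\mathcal{L}$, so the derivation is essentially forced once the risk is written in integral form.
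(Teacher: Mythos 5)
Your proposal is correct and follows essentially the same route as the paper's own proof: substitute the definition \eqref{complementary-joint-density}, cancel the factor $K-1$, exchange the two finite sums over the off-diagonal index pairs, apply the first constraint to reduce $\sum_{\overline{y}\neq y}\overline{\mathcal{L}}(f(\bmx),\overline{y})$ to $M_1-\overline{\mathcal{L}}(f(\bmx),y)$, and then use the second constraint to convert the remaining complementary-loss term into $M_2-\mathcal{L}(f(\bmx),y)$. No gaps; the argument matches the paper's step for step.
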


\begin{proof}
According to \eqref{complementary-joint-density},
\begin{align*}
&(K-1)\mathbb{E}_{\overline{p}(\bmx,\overline{y})}[\overline{\mathcal{L}}(f(\bmx),\overline{y})]
= (K-1)\int\sum_{\overline{y}=1}^K\overline{\mathcal{L}}(f(\bmx),\overline{y})\overline{p}(\bmx, \overline{y})\mathrm{d}\bmx\\
&= (K-1)\int\sum_{\overline{y}=1}^K\overline{\mathcal{L}}(f(\bmx),\overline{y})\left(\frac{1}{K-1}\sum_{y\neq \overline{y}} p(\bmx, y)\right)\mathrm{d}\bmx
= \int\sum_{y=1}^K\sum_{\overline{y}\neq y}\overline{\mathcal{L}}(f(\bmx),\overline{y})p(\bmx, y)\mathrm{d}\bmx\\
&= \mathbb{E}_{p(\bmx,y)}\left[\sum_{\overline{y}\neq y}\overline{\mathcal{L}}(f(\bmx),\overline{y})\right]
= \mathbb{E}_{p(\bmx,y)}[M_1-\overline{\mathcal{L}}(f(\bmx),y)]
= M_1-\mathbb{E}_{p(\bmx,y)}[\overline{\mathcal{L}}(f(\bmx),y)],
\end{align*}
where the fifth equality follows from the first constraint in \eqref{sum-to-one-loss}. Subsequently,
\begin{align*}
(K-1)\mathbb{E}_{\overline{p}(\bmx,\overline{y})}[\overline{\mathcal{L}}(f(\bmx),\overline{y})]
-\mathbb{E}_{p(\bmx,y)}[\mathcal{L}(f(\bmx),y)]
&= M_1-\mathbb{E}_{p(\bmx,y)}[\overline{\mathcal{L}}(f(\bmx),y)+\mathcal{L}(f(\bmx),y)]\\
&= M_1-\mathbb{E}_{p(\bmx,y)}[M_2]\\
&= M_1-M_2,
\end{align*}
where the second equality follows from the second constraint in \eqref{sum-to-one-loss}.
\end{proof}

The first constraint in \eqref{sum-to-one-loss} can be regarded as a multi-class loss version of a symmetric constraint that we later use in Theorem \ref{theorem:symmetry}.  The second constraint in \eqref{sum-to-one-loss} means that the smaller $\mathcal{L}$ is, the larger $\overline{\mathcal{L}}$ should be, i.e., if ``pattern $\bmx$ belongs to class $y$'' is correct, ``pattern $\bmx$ does not belong to class $y$'' should be incorrect.

With the expression \eqref{risk-complementary},
the classification risk \eqref{risk}
can be naively approximated in an unbiased fashion
by the sample average as
\begin{align}
  \label{empirical-complementary-risk}
  \widehat{R}(f) = \frac{K-1}{n}\sum_{i=1}^n\overline{\mathcal{L}}\big(f(\bmx_i),\overline{y}_i\big)-M_1+M_2.
\end{align}

Let us define the complementary losses corresponding to the OVA loss $\mathcal{L}_{\mathrm{OVA}}(f(\bmx), y)$
and the PC loss $\mathcal{L}_{\mathrm{PC}}\big(f(\bmx), y\big)$ as
\begin{align}
\label{OVA}
\overline{\mathcal{L}}_{\mathrm{OVA}}(f(\bmx), \overline{y})
&= \frac{1}{K-1}\sum_{y\neq \overline{y}}\ell \big(g_y(\bmx)\big)+\ell\big(-g_{\overline{y}}(\bmx)\big),\\
\label{PC}
\overline{\mathcal{L}}_{\mathrm{PC}}\big(f(\bmx), \overline{y}\big)
&= \sum_{y\neq\overline{y}}\ell\big(g_y(\bmx)-g_{\overline{y}}(\bmx)\big).
\end{align}
Then we have the following theorem (its proof is given in Appendix \ref{sec:proof_symmetry}):

\begin{theorem}\label{theorem:symmetry}
  If binary loss $\ell(z)$ satisfies
  \begin{align}
  \ell(z)+\ell(-z)=1,
  \label{symmetric-loss}
  \end{align}
  then $\overline{\mathcal{L}}_{\mathrm{OVA}}$ satisfies conditions \eqref{sum-to-one-loss} with $M_1=K$ and $M_2=2$, and $\overline{\mathcal{L}}_{\mathrm{PC}}$ satisfies conditions \eqref{sum-to-one-loss} with $M_1=K(K-1)/2$ and $M_2=K-1$.
\end{theorem}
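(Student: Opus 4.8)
The plan is to verify the two identities in \eqref{sum-to-one-loss} separately for $\overline{\mathcal{L}}_{\mathrm{OVA}}$ and for $\overline{\mathcal{L}}_{\mathrm{PC}}$ by direct substitution of the definitions \eqref{OVA}, \eqref{PC}, \eqref{oneversusall}, \eqref{pairwise}, and then using the symmetry \eqref{symmetric-loss} to collapse paired terms. Everything reduces to reindexing double sums and counting how many times each block $\ell(g_y(\bmx))+\ell(-g_y(\bmx))$ or $\ell(g_a(\bmx)-g_b(\bmx))+\ell(g_b(\bmx)-g_a(\bmx))$ appears, each such block being equal to $1$.

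For the OVA case, to check the first constraint I would substitute \eqref{OVA} into $\sum_{\overline{y}=1}^K\overline{\mathcal{L}}_{\mathrm{OVA}}(f(\bmx),\overline{y})$ and split it into the ``rest'' contribution $\frac{1}{K-1}\sum_{\overline{y}}\sum_{y\neq\overline{y}}\ell(g_y(\bmx))$ and the ``own'' contribution $\sum_{\overline{y}}\ell(-g_{\overline{y}}(\bmx))$. Swapping the order of summation in the first term, each index $y$ is paired with exactly $K-1$ choices of $\overline{y}$, so the $1/(K-1)$ cancels and that term becomes $\sum_{y=1}^K\ell(g_y(\bmx))$; the second term is just $\sum_{y=1}^K\ell(-g_y(\bmx))$. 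Adding and applying \eqref{symmetric-loss} termwise gives $\sum_{y=1}^K 1 = K$, hence $M_1=K$. For the second constraint I would add \eqref{OVA} evaluated at $\overline{y}=y$ to \eqref{oneversusall} at the same $y$: the two $\frac{1}{K-1}$-weighted sums combine into $\frac{1}{K-1}\sum_{y'\neq y}\bigl[\ell(g_{y'}(\bmx))+\ell(-g_{y'}(\bmx))\bigr]=1$, while the leftover terms $\ell(-g_y(\bmx))+\ell(g_y(\bmx))=1$; adding gives $M_2=2$.

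For the PC case, the first constraint turns $\sum_{\overline{y}=1}^K\overline{\mathcal{L}}_{\mathrm{PC}}(f(\bmx),\overline{y})$ into a sum of $\ell(g_y(\bmx)-g_{\overline{y}}(\bmx))$ over all ordered pairs $(y,\overline{y})$ with $y\neq\overline{y}$; pairing $(a,b)$ with $(b,a)$ and invoking \eqref{symmetric-loss} contributes $1$ for each of the $\binom{K}{2}=K(K-1)/2$ unordered pairs, so $M_1=K(K-1)/2$. For the second constraint I would add \eqref{PC} at $\overline{y}=y$ to \eqref{pairwise}: the two sums range over the same index set $y'\neq y$ and pair up as $\ell(g_{y'}(\bmx)-g_y(\bmx))+\ell(g_y(\bmx)-g_{y'}(\bmx))=1$, summing to $K-1$, hence $M_2=K-1$.

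I do not expect a genuine obstacle here; the statement is a bookkeeping exercise. The only places needing care are (i) the order-swap in the OVA double sum, where one must confirm that each summand index is hit exactly $K-1$ times so the normalization cancels cleanly, and (ii) in the PC first constraint, recognizing that the ordered-pair sum has $K(K-1)$ terms partitioning into $K(K-1)/2$ symmetric pairs, so as not to be off by a factor of two. Once the indexing is set up correctly, \eqref{symmetric-loss} does the rest.
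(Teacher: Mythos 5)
Your proposal is correct and follows essentially the same route as the paper's proof in Appendix~\ref{sec:proof_symmetry}: swap the double sum for the OVA first constraint so the $1/(K-1)$ normalization cancels, pair $\ell(z)$ with $\ell(-z)$ termwise for both second constraints, and group the $K(K-1)$ ordered pairs into $K(K-1)/2$ symmetric pairs for the PC first constraint. All four computations and the resulting constants match the paper exactly.
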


For example, the following binary losses satisfy
the symmetric condition \eqref{symmetric-loss} (see Figure \ref{fig:binary_losses}):
\begin{align}
\label{zero-one}
\mbox{Zero-one loss: }
\ell_{0\textrm{-}1}(z\big) &
= 
\begin{cases}
0 & \text{if } z>0,\\
1 & \text{if } z\le 0,\\
\end{cases}\\
\label{sigmoid}
\mbox{Sigmoid loss: }
\ell_{\textrm{S}}(z\big) &
= \frac{1}{1+e^z},\\
\label{ramp}
\mbox{Ramp loss: }
\ell_{\textrm{R}}\big(z\big) &
= \frac{1}{2}\max\Big(0,\min\big(2,1-z\big)\Big).
\end{align}
\begin{figure}[t]
\centering
\includegraphics[bb = 0 0 300 280, scale=0.6]{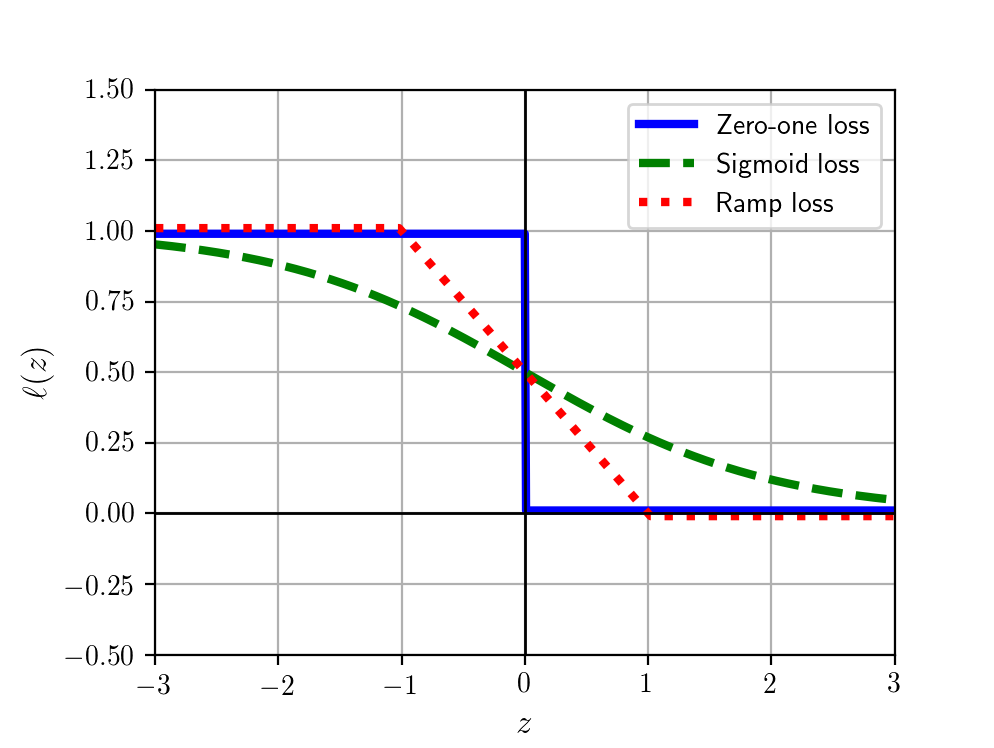}
\caption{Examples of binary losses that satisfy the symmetric condition \eqref{symmetric-loss}.}
\label{fig:binary_losses}
\end{figure}
Note that these losses are non-convex \cite{punips}.
In practice, the sigmoid loss or ramp loss may be used for training a classifier,
while the zero-one loss may be used for tuning hyper-parameters (see Section~\ref{sec:experiments} for the details).

\section{Estimation Error Bounds}
\label{sec:theory}

In this section, we establish the estimation error bounds for the proposed method.

Let $\cG=\{g(\bmx)\}$ be a function class for empirical risk minimization, $\sigma_1,\ldots,\sigma_n$ be $n$ Rademacher variables, then the Rademacher complexity of $\cG$ for $\cX$ of size $n$ drawn from $p(\bmx)$ is defined as follows \citep{mohri12FML}:
\begin{align*}
\fR_n(\cG) = \bE_\cX\bE_{\sigma_1,\ldots,\sigma_n}
\left[\sup_{g\in\cG}\frac{1}{n}\sum_{\bmx_i\in\cX}\sigma_ig(\bmx_i)\right];
\end{align*}
define the Rademacher complexity of $\cG$ for $\overline{\cX}$ of size $n$ drawn from $\overline{p}(\bmx)$ as
\begin{align*}
\overline{\fR}_n(\cG) = \bE_{\overline{\cX}}\bE_{\sigma_1,\ldots,\sigma_n}
\left[\sup_{g\in\cG}\frac{1}{n}\sum_{\bmx_i\in\overline{\cX}}\sigma_ig(\bmx_i)\right].
\end{align*}
Note that $\overline{p}(\bmx)=p(\bmx)$ and thus $\overline{\fR}_n(\cG)=\fR_n(\cG)$, which enables us to express the obtained theoretical results using the standard Rademacher complexity $\fR_n(\cG)$.

To begin with, let $\widetilde{\ell}(z)=\ell(z)-\ell(0)$ be the shifted loss such that $\widetilde{\ell}(0)=0$ (in order to apply the Talagrand's contraction lemma \citep{ledoux91PBS} later), and $\widetilde{\cL}_\OVA$ and $\widetilde{\cL}_\PC$ be losses defined following \eqref{OVA} and \eqref{PC} but with $\widetilde{\ell}$ instead of $\ell$; let $L_\ell$ be any (not necessarily the best) Lipschitz constant of $\ell$. Define the corresponding function classes as follows:
\begin{align*}
\cH_\OVA &= \{(\bmx,\overline{y})\mapsto\widetilde{\cL}_\OVA(f(\bmx),\overline{y})
\mid g_1,\ldots,g_K\in\cG\},\\
\cH_\PC &= \{(\bmx,\overline{y})\mapsto\widetilde{\cL}_\PC(f(\bmx),\overline{y})
\mid g_1,\ldots,g_K\in\cG\}.
\end{align*}
Then we can obtain the following lemmas (their proofs are given in Appendices \ref{sec:proof_lemma3} and \ref{sec:proof_lemma4}):

\begin{lemma}
  \label{thm:Rade-OVA}
  Let $\overline{\fR}_n(\cH_\OVA)$ be the Rademacher complexity of $\cH_\OVA$ for $\cS$ of size $n$ drawn from $\overline{p}(x,\overline{y})$ defined as
  \begin{align*}
  \overline{\fR}_n(\cH_\OVA)
  = \bE_\cS\bE_{\sigma_1,\ldots,\sigma_n} \left[\sup_{h\in\cH_\OVA}\frac{1}{n} \sum_{(\bmx_i,\overline{y}_i)\in\cS}\sigma_ih(\bmx_i,\overline{y}_i)\right].
  \end{align*}
  Then,
  \begin{align*}
  \overline{\fR}_n(\cH_\OVA)\le KL_\ell\fR_n(\cG).
  \end{align*}
\end{lemma}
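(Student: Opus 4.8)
The plan is to exploit the additive structure of $\widetilde{\cL}_\OVA$ over the $K$ binary classifiers $g_1,\ldots,g_K$, peel them off one at a time, and then apply Talagrand's contraction lemma~\citep{ledoux91PBS} coordinate-wise to each resulting piece. First I would rewrite $\sum_{y\neq\overline{y}}\widetilde{\ell}(g_y(\bmx))=\sum_{y=1}^{K}\widetilde{\ell}(g_y(\bmx))-\widetilde{\ell}(g_{\overline{y}}(\bmx))$ and regroup the whole loss by class index $k$, obtaining
\[
\widetilde{\cL}_\OVA(f(\bmx),\overline{y})=\sum_{k=1}^{K}\phi_k(\bmx,\overline{y}),
\qquad
\phi_k(\bmx,\overline{y})=
\begin{cases}
\widetilde{\ell}\big(-g_k(\bmx)\big), & k=\overline{y},\\
\tfrac{1}{K-1}\,\widetilde{\ell}\big(g_k(\bmx)\big), & k\neq\overline{y}.
\end{cases}
\]
Each $\phi_k$ depends only on $g_k$, so since $g_1,\ldots,g_K$ range over $\cG$ independently the supremum defining $\overline{\fR}_n(\cH_\OVA)$ splits additively across $k$, and by linearity of expectation
\[
\overline{\fR}_n(\cH_\OVA)=\sum_{k=1}^{K}\bE_\cS\bE_{\sigma_1,\ldots,\sigma_n}\Big[\sup_{g_k\in\cG}\tfrac1n\sum_{i=1}^{n}\sigma_i\,\phi_k(\bmx_i,\overline{y}_i)\Big].
\]

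The next step is to bound each of the $K$ summands by $L_\ell\fR_n(\cG)$. Fix $k$ and condition on $\cS$. For each $i$ we have $\phi_k(\bmx_i,\overline{y}_i)=\psi_{k,i}\big(g_k(\bmx_i)\big)$, where $\psi_{k,i}$ equals $z\mapsto\widetilde{\ell}(-z)$ if $\overline{y}_i=k$ and $z\mapsto\tfrac{1}{K-1}\widetilde{\ell}(z)$ otherwise. Since $\widetilde{\ell}(0)=0$ and $\widetilde{\ell}$ inherits the Lipschitz constant $L_\ell$ of $\ell$, every $\psi_{k,i}$ satisfies $\psi_{k,i}(0)=0$ and is $L_\ell$-Lipschitz (the factor $\tfrac{1}{K-1}\le1$ only shrinks the constant, and $z\mapsto-z$ is $1$-Lipschitz). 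Applying the coordinate-wise form of Talagrand's contraction lemma — which permits a different $L_\ell$-Lipschitz map, vanishing at $0$, in each coordinate — to the Rademacher average over $g_k\in\cG$ removes the $\psi_{k,i}$'s at the cost of a factor $L_\ell$:
\[
\bE_\cS\bE_{\sigma_1,\ldots,\sigma_n}\Big[\sup_{g_k\in\cG}\tfrac1n\sum_{i=1}^{n}\sigma_i\,\psi_{k,i}(g_k(\bmx_i))\Big]
\le L_\ell\,\bE_\cS\bE_{\sigma_1,\ldots,\sigma_n}\Big[\sup_{g_k\in\cG}\tfrac1n\sum_{i=1}^{n}\sigma_i\,g_k(\bmx_i)\Big]
=L_\ell\,\overline{\fR}_n(\cG).
\]
Summing over $k=1,\ldots,K$ and using $\overline{\fR}_n(\cG)=\fR_n(\cG)$ (which holds because $\overline{p}(\bmx)=p(\bmx)$) gives $\overline{\fR}_n(\cH_\OVA)\le KL_\ell\fR_n(\cG)$.

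The step I expect to need the most care is the contraction: the map applied to $g_k(\bmx_i)$ is not a single fixed Lipschitz function but depends on $i$ via whether $\overline{y}_i=k$, so one must invoke the version of the contraction lemma with per-coordinate Lipschitz functions (all sharing the common bound $L_\ell$ and vanishing at $0$) rather than its single-function textbook statement; a fallback is to split the index set into $\{i:\overline{y}_i=k\}$ and its complement and contract each block separately. Everything else — the regrouping identity, the additive splitting of the supremum over the independent $g_k$, and $\overline{\fR}_n(\cG)=\fR_n(\cG)$ — is routine. Note that the argument deliberately throws away slack (e.g.\ the $\tfrac{1}{K-1}$ factors), which is precisely what yields the clean bound $KL_\ell\fR_n(\cG)$; the companion bound for $\cH_\PC$ should follow the same peel-and-contract template.
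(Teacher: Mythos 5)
Your proof is correct, and it takes a genuinely different route from the paper's. You split $\widetilde{\cL}_\OVA$ by class index into $K$ pieces $\phi_k$, each a function of the single classifier $g_k$, so that the supremum over $\cG^K$ decomposes \emph{exactly} into $K$ separate suprema; conditioning on $\cS$ then turns each piece into a fixed per-coordinate $L_\ell$-Lipschitz map $\psi_{k,i}$ of $g_k(\bmx_i)$ vanishing at $0$, and the coordinate-wise form of Talagrand's contraction lemma finishes each piece with a single factor $L_\ell$. The paper instead rewrites $\widetilde{\cL}_\OVA$ as a label-independent sum $\tfrac{1}{K-1}\sum_y\widetilde{\ell}(g_y)$ plus a multiple of $\widetilde{\ell}(-g_{\overline{y}_i})$, bounds the first part by sub-additivity of the supremum over the $K$ classes, and removes the $\overline{y}_i$-dependence of the second part via the device $I(y=\overline{y}_i)=(\alpha_i+1)/2$ with $\alpha_i\sigma_i$ distributed as $\sigma_i$, applying contraction only at the end. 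Your bookkeeping is tighter: the only slack you introduce is discarding the harmless $1/(K-1)$ factors, whereas the paper's rewriting relies on the antisymmetry $\widetilde{\ell}(-z)=-\widetilde{\ell}(z)$ and its printed coefficient $(K-2)/(K-1)$ should in fact be $K/(K-1)$, with which the paper's two-term accounting would yield a constant larger than $K$; your class-wise decomposition delivers the stated constant $K$ cleanly. The one delicate point --- that the contraction must be invoked in its per-coordinate version, legitimate here because the $\psi_{k,i}$ are deterministic once $\cS$ is fixed and the $\sigma_i$ are independent of $\cS$ --- is exactly the point you flag and handle correctly. One caveat on your closing remark: for Lemma~\ref{thm:Rade-PC} the terms $\widetilde{\ell}(g_{y'}-g_{\overline{y}})$ couple two classifiers each, so the supremum no longer splits exactly and sub-additivity (with the attendant extra factor) must return.
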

\begin{lemma}
  \label{thm:Rade-PC}
  Let $\overline{\fR}_n(\cH_\PC)$ be the Rademacher complexity of $\cH_\PC$ defined similarly to $\overline{\fR}_n(\cH_\OVA)$. Then,
\begin{align*}
  \overline{\fR}_n(\cH_\PC)\le2K(K-1)L_\ell\fR_n(\cG).
\end{align*}
\end{lemma}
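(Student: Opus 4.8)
The plan is to bound $\overline{\fR}_n(\cH_\PC)$ by peeling off the sum over $y\neq\overline{y}$ and then applying Talagrand's contraction lemma to each resulting term, exactly as in the (expected) proof of Lemma~\ref{thm:Rade-OVA}. First I would write out $\widetilde{\cL}_\PC(f(\bmx_i),\overline{y}_i) = \sum_{y\neq\overline{y}_i}\widetilde{\ell}(g_y(\bmx_i)-g_{\overline{y}_i}(\bmx_i))$ inside the supremum. The outer sum over $y$ has $K-1$ terms, but the index set depends on $\overline{y}_i$, which varies with $i$; so the first real step is to enlarge the sum to run over \emph{all} pairs $(y,y')$ with $y\neq y'$ (there are $K(K-1)$ of them), using indicator weights to pick out the correct term for each $i$. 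Then by sub-additivity of the supremum over the (now data-independent) pair index, $\overline{\fR}_n(\cH_\PC)$ is at most a sum over the $K(K-1)$ ordered pairs $(y,y')$ of a Rademacher-type complexity of the single-coordinate class $\{(\bmx,\overline{y})\mapsto \mathbf{1}[\cdots]\,\widetilde{\ell}(g_y(\bmx)-g_{y'}(\bmx))\}$.

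Next I would handle a single term. The map $z\mapsto\widetilde{\ell}(z)$ is $L_\ell$-Lipschitz and satisfies $\widetilde{\ell}(0)=0$, so Talagrand's contraction lemma (in the form that permits the per-example indicator weights, since $|\mathbf{1}[\cdot]\cdot|\le 1$) lets me strip off $\widetilde{\ell}$ at the cost of a factor $L_\ell$, leaving a Rademacher complexity of $\{(\bmx,\overline{y})\mapsto g_y(\bmx)-g_{y'}(\bmx)\}$ over $\overline{\cX}$. Splitting this into the $g_y$ part and the $-g_{y'}$ part via sub-additivity and absorbing the sign into the Rademacher variables, each piece is bounded by $\fR_n(\cG)$ (using $\overline{\fR}_n(\cG)=\fR_n(\cG)$ from the remark in the excerpt). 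So each of the $K(K-1)$ terms contributes at most $2L_\ell\fR_n(\cG)$, giving the claimed $\overline{\fR}_n(\cH_\PC)\le 2K(K-1)L_\ell\fR_n(\cG)$.

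The main obstacle, and the step I would be most careful about, is the first one: legitimately converting the $\overline{y}_i$-dependent summation range into a fixed index set so that the supremum-splitting is valid. If one naively writes $\sup_g \frac1n\sum_i \sigma_i \sum_{y\neq\overline{y}_i}(\cdots) \le \sum_{\text{pairs}} \sup_g \frac1n\sum_i \sigma_i \mathbf{1}[\overline{y}_i \text{ picks this pair}](\cdots)$, one must check that the contraction lemma still applies when the scalar in front of $\widetilde{\ell}(g_y(\bmx_i)-g_{y'}(\bmx_i))$ is an example-dependent indicator rather than $1$; this is fine because contraction only needs the outer function to be $1$-Lipschitz uniformly, and multiplying a $1$-Lipschitz function by a constant in $\{0,1\}$ keeps it $1$-Lipschitz. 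A secondary bookkeeping point is that the factor is $2K(K-1)$ rather than $K(K-1)$: the ``$2$'' comes precisely from the two summands $g_y$ and $-g_{y'}$ after contraction, and the ``$K(K-1)$'' from the number of ordered pairs, so I would make sure not to accidentally double-count or use unordered pairs. Everything else — sub-additivity of $\sup$, linearity of expectation over the Rademacher variables, and absorbing signs — is routine.
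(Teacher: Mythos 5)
Your proof is correct and follows essentially the same route as the paper: reduce $\widetilde{\cL}_\PC$ to a sum over the $K(K-1)$ ordered pairs $(y,y')$, apply Talagrand's contraction lemma to each term (this is exactly why the paper arranged $\widetilde{\ell}(0)=0$), and split each difference $g_{y}-g_{y'}$ into two copies of $\fR_n(\cG)$, yielding the factor $2K(K-1)L_\ell$. The only minor divergence is in how the data-dependent index $\overline{y}_i$ is eliminated: you keep the indicator as a per-example weight and invoke the per-coordinate form of the contraction lemma, whereas the paper first upper-bounds the selected term by the full sum over all $y$ using the fact that $\alpha_i\sigma_i$ has the same distribution as $\sigma_i$; both steps are valid and lead to the same constant.
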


Based on Lemmas~\ref{thm:Rade-OVA} and \ref{thm:Rade-PC}, we can derive the uniform deviation bounds of $\widehat{R}(f)$ as follows (its proof is given in Appendix \ref{sec:proof_lemma5}):

\begin{lemma}
  \label{thm:uni-dev}%
  For any $\delta>0$, with probability at least $1-\delta$,
  \begin{align*}
  \sup_{g_1,\ldots,g_K\in\cG}\left|\widehat{R}(f)-R(f)\right|
  \le 2K(K-1)L_\ell\fR_n(\cG) +(K-1)\sqrt{\frac{2\ln(2/\delta)}{n}},
  \end{align*}
  where $\widehat{R}(f)$ is w.r.t.\ $\overline{\cL}_\OVA$, and
  \begin{align*}
  \sup_{g_1,\ldots,g_K\in\cG}\left|\widehat{R}(f)-R(f)\right|
  \le 4K(K-1)^2L_\ell\fR_n(\cG) +(K-1)^2\sqrt{\frac{\ln(2/\delta)}{2n}},
  \end{align*}
  where $\widehat{R}(f)$ is w.r.t.\ $\overline{\cL}_\PC$.
\end{lemma}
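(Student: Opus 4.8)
The plan is the standard Rademacher-complexity argument: reduce the two-sided deviation of $\widehat{R}(f)$ to a supremum of a centered empirical process over the shifted loss classes $\cH_\OVA$ and $\cH_\PC$, concentrate this supremum about its mean via McDiarmid's bounded-difference inequality \citep{mohri12FML}, bound the mean by symmetrization together with Lemmas~\ref{thm:Rade-OVA} and~\ref{thm:Rade-PC}, and then track the constants.

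First I would use Theorem~\ref{theorem:R(f)} to remove the offsets: the terms $-M_1+M_2$ in \eqref{empirical-complementary-risk} and \eqref{risk-complementary} coincide and cancel in the difference, and replacing $\ell$ by the shifted loss $\widetilde{\ell}=\ell-\ell(0)$ changes $\overline{\cL}_\OVA$ (resp.\ $\overline{\cL}_\PC$) only by the additive constant $2\ell(0)$ (resp.\ $(K-1)\ell(0)$), which also cancels in a difference. Hence $\sup_{g_1,\dots,g_K\in\cG}|\widehat{R}(f)-R(f)| = (K-1)\sup_{h\in\cH_\OVA}|\frac1n\sum_{i=1}^n h(\bmx_i,\overline{y}_i)-\bE_{\overline{p}}[h]|$ for the OVA loss, and the same with $\cH_\PC$ in place of $\cH_\OVA$ for the PC loss, where the map $(g_1,\dots,g_K)\mapsto h$ is onto the respective class by definition.

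Next, set $\Phi=\sup_h(\frac1n\sum_i h(\bmx_i,\overline{y}_i)-\bE_{\overline{p}}[h])$ and $\Phi'=\sup_h(\bE_{\overline{p}}[h]-\frac1n\sum_i h(\bmx_i,\overline{y}_i))$, so the above inner supremum equals $\max(\Phi,\Phi')$. In the OVA case, the symmetric condition \eqref{symmetric-loss} together with nonnegativity of $\ell$ forces $0\le\ell\le1$, so every member of $\cH_\OVA$ (a weight-one average of shifted binary losses plus one more such term) takes values in an interval of length $2$; thus changing one example $(\bmx_i,\overline{y}_i)$ perturbs $\Phi$ and $\Phi'$ by at most $2/n$. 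McDiarmid's inequality then gives, for each of $\Phi$ and $\Phi'$, that $\Phi\le\bE[\Phi]+\sqrt{2\ln(2/\delta)/n}$ with probability at least $1-\delta/2$, while symmetrization gives $\bE[\Phi],\bE[\Phi']\le 2\overline{\fR}_n(\cH_\OVA)\le 2KL_\ell\fR_n(\cG)$ by Lemma~\ref{thm:Rade-OVA}; a union bound over $\Phi,\Phi'$ and multiplication by $K-1$ yields the first claimed bound. The PC case is identical except that each member of $\cH_\PC$ is a sum of $K-1$ shifted binary losses and hence ranges over an interval of length $K-1$, so the bounded-difference constant is $(K-1)/n$ (McDiarmid now contributing $(K-1)\sqrt{\ln(2/\delta)/(2n)}$) and Lemma~\ref{thm:Rade-PC} gives $\bE[\Phi],\bE[\Phi']\le 2\overline{\fR}_n(\cH_\PC)\le 4K(K-1)L_\ell\fR_n(\cG)$; multiplying by $K-1$ gives the second claimed bound.

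I expect the work to be essentially bookkeeping rather than any new idea. The two points that require care are: (i) correctly extracting the ranges $2$ and $K-1$ of $\overline{\cL}_\OVA$ and $\overline{\cL}_\PC$ from the symmetric condition, since these fix the $\sqrt{\cdot}$ terms in the two bounds; and (ii) being consistent about which class one works with --- symmetrization and Lemmas~\ref{thm:Rade-OVA}--\ref{thm:Rade-PC} must be applied to the shifted classes $\cH_\OVA,\cH_\PC$ (on which Talagrand's contraction is valid precisely because $\widetilde{\ell}(0)=0$), whereas the bounded-difference inequality is applied to the shift-invariant centered empirical average. Once this is lined up, the remaining steps are routine.
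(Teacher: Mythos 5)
Your proposal is correct and follows essentially the same route as the paper's own proof: McDiarmid's bounded-difference inequality applied to each one-sided supremum (with ranges $2$ for $\overline{\cL}_\OVA$ and $K-1$ for $\overline{\cL}_\PC$ extracted from the symmetric condition), followed by symmetrization and Lemmas~\ref{thm:Rade-OVA} and \ref{thm:Rade-PC}, with all constants matching. The only cosmetic difference is that you normalize by $K-1$ before concentrating and restore the factor at the end, and you make explicit the cancellation of the offsets $-M_1+M_2$ and of the shift $\ell(0)$, which the paper leaves implicit.
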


Let $(g_1^*,\ldots,g_K^*)$ be the true risk minimizer and $(\widehat{g}_1,\ldots,\widehat{g}_K)$ be the empirical risk minimizer, i.e.,
\begin{align*}
(g_1^*,\ldots,g_K^*)=\argmin_{g_1,\ldots,g_K\in\cG}R(f)
~~~\mbox{ and }~~~
(\widehat{g}_1,\ldots,\widehat{g}_K)=\argmin_{g_1,\ldots,g_K\in\cG}\widehat{R}(f).
\end{align*}
Let also
\begin{align*}
f^*(\bmx)=\argmax_{y\in\{1,\ldots,K\}}g_y^*(\bmx)
~~~\mbox{ and }~~~
\widehat{f}(\bmx)=\argmax_{y\in\{1,\ldots,K\}}\widehat{g}_y(\bmx).
\end{align*}
Finally, based on Lemma~\ref{thm:uni-dev}, we can establish the estimation error bounds as follows:

\begin{theorem}
  \label{thm:est-err-bnd}%
  For any $\delta>0$, with probability at least $1-\delta$,
  \begin{align*}
  R(\widehat{f})-R(f^*)
  \le 4K(K-1)L_\ell\fR_n(\cG) +(K-1)\sqrt{\frac{8\ln(2/\delta)}{n}},
  \end{align*}
  if $(\widehat{g}_1,\ldots,\widehat{g}_K)$ is trained by minimizing $\widehat{R}(f)$ is w.r.t.\ $\overline{\cL}_\OVA$, and
  \begin{align*}
  R(\widehat{f})-R(f^*)
  \le 8K(K-1)^2L_\ell\fR_n(\cG) +(K-1)^2\sqrt{\frac{2\ln(2/\delta)}{n}},
  \end{align*}
  if $(\widehat{g}_1,\ldots,\widehat{g}_K)$ is trained by minimizing $\widehat{R}(f)$ is w.r.t.\ $\overline{\cL}_\PC$.
\end{theorem}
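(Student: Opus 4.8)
The plan is to derive the estimation error bound from the uniform deviation bound of Lemma~\ref{thm:uni-dev} by a standard argument that decomposes the excess risk $R(\widehat{f}) - R(f^*)$ into terms controlled by $\sup|\widehat R - R|$. Concretely, I would write
\begin{align*}
R(\widehat f) - R(f^*)
&= \bigl(R(\widehat f) - \widehat R(\widehat f)\bigr)
 + \bigl(\widehat R(\widehat f) - \widehat R(f^*)\bigr)
 + \bigl(\widehat R(f^*) - R(f^*)\bigr).
\end{align*}
The middle term is $\le 0$ because $\widehat f$ (through $(\widehat g_1,\dots,\widehat g_K)$) is the minimizer of $\widehat R$ over $\cG^K$, while $(g_1^*,\dots,g_K^*)$ is merely one feasible point. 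The first and third terms are each bounded in absolute value by $\sup_{g_1,\dots,g_K\in\cG}|\widehat R(f) - R(f)|$, so the whole excess risk is at most $2\sup_{g_1,\dots,g_K\in\cG}|\widehat R(f) - R(f)|$.

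Next I would invoke Lemma~\ref{thm:uni-dev} to bound that supremum. For the OVA case, the lemma gives $\sup|\widehat R - R| \le 2K(K-1)L_\ell\fR_n(\cG) + (K-1)\sqrt{2\ln(2/\delta)/n}$ with probability at least $1-\delta$; multiplying by $2$ yields $4K(K-1)L_\ell\fR_n(\cG) + (K-1)\sqrt{8\ln(2/\delta)/n}$, which is exactly the stated bound (note $2\sqrt{2\ln(2/\delta)/n} = \sqrt{8\ln(2/\delta)/n}$). For the PC case, doubling $4K(K-1)^2 L_\ell\fR_n(\cG) + (K-1)^2\sqrt{\ln(2/\delta)/(2n)}$ gives $8K(K-1)^2 L_\ell\fR_n(\cG) + (K-1)^2\sqrt{2\ln(2/\delta)/n}$, again matching the claim. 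Since both terms in Lemma~\ref{thm:uni-dev} already hold on the same event of probability $\ge 1-\delta$, no union bound or rescaling of $\delta$ is needed here.

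There is essentially no hard part left once Lemma~\ref{thm:uni-dev} is in hand; the only things to be careful about are (i) that the optimality of the empirical minimizer is over the full parameter tuple $(g_1,\dots,g_K)$, matching the domain over which the supremum in Lemma~\ref{thm:uni-dev} is taken, so that the cancellation of the middle term and the bounding of the outer terms use the same function class, and (ii) the elementary algebraic simplifications of the constants ($2\cdot\sqrt{2} = \sqrt{8}$, and $2\cdot\tfrac{1}{\sqrt 2} = \sqrt 2$). The real analytical content — the contraction-lemma arguments and McDiarmid's inequality — has already been absorbed into Lemmas~\ref{thm:Rade-OVA}--\ref{thm:uni-dev}, so Theorem~\ref{thm:est-err-bnd} follows in a few lines.
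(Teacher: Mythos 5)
Your proposal is correct and matches the paper's own proof essentially verbatim: the same three-term decomposition, the same cancellation of the empirical-difference term via $\widehat{R}(\widehat{f})\le\widehat{R}(f^*)$, and the same bound by $2\sup_{g_1,\ldots,g_K\in\cG}|\widehat{R}(f)-R(f)|$ followed by Lemma~\ref{thm:uni-dev}. The constant arithmetic ($2\sqrt{2}=\sqrt{8}$ and $2/\sqrt{2}=\sqrt{2}$) also checks out.
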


\begin{proof}
Based on Lemma~\ref{thm:uni-dev}, the estimation error bounds can be proven through
\begin{align*}
R(\widehat{f})-R(g^*)
&= \left(\widehat{R}(\widehat{f})-\widehat{R}(f^*)\right)
+\left(R(\widehat{f})-\widehat{R}(\widehat{f})\right)
+\left(\widehat{R}(f^*)-R(f^*)\right)\\
&\le 0 +2\sup_{g_1,\ldots,g_K\in\cG}\left|\widehat{R}(f)-R(f)\right|,
\end{align*}
where we used that $\widehat{R}(\widehat{f})\le\widehat{R}(f^*)$ by the definition of $\widehat{f}$.
\end{proof}

Theorem~\ref{thm:est-err-bnd} also guarantees that learning from complementary labels is consistent: as $n\to\infty$, $R(\widehat{f})\to R(f^*)$. Consider a linear-in-parameter model defined by
\begin{equation*}
\cG = \{g(\bmx)=\langle{w,\phi(\bmx)}\rangle_\cH \mid \|w\|_\cH\le C_w, \|\phi(\bmx)\|_\cH\le C_\phi \},
\end{equation*}
where $\cH$ is a Hilbert space with an inner product $\langle\cdot,\cdot\rangle_\cH$, $w\in\cH$ is a normal, $\phi:\bR^d\to\cH$ is a feature map, and $C_w>0$ and $C_\phi>0$ are constants \citep{scholkopf01LK}. It is known that $\fR_n(\cG)\le C_wC_\phi/\sqrt{n}$ \citep{mohri12FML} and thus $R(\widehat{f})\to R(f^*)$ in $\cO_p(1/\sqrt{n})$ if this $\cG$ is used, where $\cO_p$ denotes the order in probability. This order is already the optimal parametric rate and cannot be improved without additional strong assumptions on $\overline{p}(\bmx,\overline{y})$, $\ell$ and $\cG$ jointly.

\section{Incorporation of ordinary labels}
\label{sec:extension}
In many practical situations, we may also have ordinarily labeled data in addition to complementarily labeled data. In such cases, we want to leverage both kinds of labeled data to obtain more accurate classifiers. To this end, motivated by \cite{sakai17icml}, let us consider a convex combination of the classification risks derived from ordinarily labeled data and complementarily labeled data:
\begin{align}
\label{risk-combination}
R(f) = \alpha \mathbb{E}_{p(\bmx,y)}[\mathcal{L}(f(\bmx),y)]+(1-\alpha) \Big[  (K-1)\mathbb{E}_{\overline{p}(\bmx,\overline{y})}[\overline{\mathcal{L}}(f(\bmx),\overline{y})]-M_1+M_2 \Big],
\end{align}
where $\alpha \in [0,1]$ is a hyper-parameter that interpolates between the two risks.
The combined risk \eqref{risk-combination} can be naively approximated by the sample averages as
\begin{align}
\label{empirical-combination}
\widehat{R}(f) = \frac{\alpha}{m}\sum^m_{j=1}\mathcal{L}(f(\bmx_j), y_j) + \frac{(1-\alpha)(K-1)}{n}\sum^n_{i=1}\overline{\mathcal{L}}(f(\bmx_i), \overline{y}_i),
\end{align}
where $\{(\bmx_j, y_j)\}^m_{j=1}$ are ordinarily labeled data and $\{(\bmx_i,\overline{y}_i)\}^n_{i=1}$ are complementarily labeled data.

As explained in the introduction, we can naturally obtain both ordinarily and complementarily labeled data through crowdsourcing \cite{crowdsourcing}.
Our risk estimator \eqref{empirical-combination} can utilize both kinds of labeled data to obtain better classifiers\footnote{
Note that when pattern $\bmx$ has already been equipped with ordinary label $y$, giving complementary label $\overline{y}$ does not bring us any additional information (unless the ordinary label is noisy).}. We will experimentally demonstrate the usefulness of this combination method in Section \ref{sec:experiments}.

\section{Experiments}\label{sec:experiments}
In this section, we experimentally evaluate the performance of the proposed methods.

\subsection{Comparison of different losses}
\label{sec:comparison_between_proposed_methods}
Here we first compare the performance among four variations of the proposed method with different loss functions: OVA \eqref{OVA} and PC \eqref{PC}, each with the sigmoid loss \eqref{sigmoid} and ramp loss \eqref{ramp}.  We used the MNIST hand-written digit dataset, downloaded from the website of the late Sam Roweis\footnote{See \url{http://cs.nyu.edu/~roweis/data.html}.} (with all patterns standardized to have zero mean and unit variance), with different number of classes: 3 classes (digits ``1'' to ``3'') to 10 classes (digits ``1'' to ``9'' and ``0'').  From each class, we randomly sampled 500 data for training and 500 data for testing, and generated complementary labels by randomly selecting one of the complementary classes. From the training dataset, we left out 25\% of the data for validating hyperparameter based on \eqref{empirical-complementary-risk} with the zero-one loss plugged in \eqref{OVA} or \eqref{PC}.

For all the methods, we used a linear-in-input model 
$g_k(\bmx)=\bmw_k^\top\bmx + b_k$ as the binary classifier,
where $^\top$ denotes the transpose, $\bmw_k\in \mathbb{R}^d$ is the weight parameter, and $b_k\in\mathbb{R}$ is the bias parameter for class $k\in\{1,\ldots, K\}$.  We added an $\ell_2$-regularization term, with the regularization parameter chosen from  $\{10^{-4},10^{-3},\ldots,10^{4}\}$.  Adam \cite{adam} was used for optimization with 5,000 iterations, with mini-batch size 100.  We reported the test accuracy of the model with the best validation score out of all iterations.  All experiments were carried out with Chainer \cite{chainer}.

We reported means and standard deviations of the classification accuracy over five trials in Table~\ref{tb:compareproposed}.  From the results, we can see that the performance of all four methods deteriorates as the number of classes increases.  This is intuitive because supervised information that complementary labels contain becomes weaker with more classes.

The table also shows that there is no significant difference in classification accuracy among the four losses.
Since the PC formulation is regarded as a more direct approach for classification \cite{vapnik} (it takes the sign of the difference of the classifiers, instead of the sign of each classifier as in OVA) and the sigmoid loss is smooth,
we use PC with the sigmoid loss as a representative of our proposed method
in the following experiments.

\begin{table}[t]
  \centering
  \caption{Means and standard deviations of classification accuracy over five trials in percentage, when the number of classes (``cls'') is changed for the MNIST dataset. ``PC'' is \eqref{PC}, ``OVA'' is \eqref{OVA}, ``Sigmoid'' is \eqref{sigmoid}, and ``Ramp'' is \eqref{ramp}. Best and equivalent methods (with 5\% t-test) are highlighted in boldface.}
  \label{tb:compareproposed}
  \begin{tabular}{c|cccccccc}
\toprule
Method & 3 cls & 4 cls & 5 cls & 6 cls & 7 cls & 8 cls & 9 cls & 10 cls \\
\toprule
\shortstack{OVA\\Sigmoid} &
$\shortstack{\bf{95.2}\\\bf{(0.9)}}$ & 
$\shortstack{\bf{91.4}\\\bf{(0.5)}}$ & 
$\shortstack{\bf{87.5}\\\bf{(2.2)}}$ & 
$\shortstack{\bf{82.0}\\\bf{(1.3)}}$ & 
$\shortstack{\bf{74.5}\\\bf{(2.9)}}$ & 
$\shortstack{\bf{73.9}\\\bf{(1.2)}}$ & 
$\shortstack{\bf{63.6}\\\bf{(4.0)}}$ & 
$\shortstack{\bf{57.2}\\\bf{(1.6)}}$ \\ 
\midrule
\shortstack{OVA\\Ramp} &
$\shortstack{\bf{95.1}\\\bf{(0.9)}}$ & 
$\shortstack{\bf{90.8}\\\bf{(1.0)}}$ & 
$\shortstack{\bf{86.5}\\\bf{(1.8)}}$ & 
$\shortstack{\bf{79.4}\\\bf{(2.6)}}$ & 
$\shortstack{\bf{73.9}\\\bf{(3.9)}}$ & 
$\shortstack{\bf{71.4}\\\bf{(4.0)}}$ & 
$\shortstack{\bf{66.1}\\\bf{(2.1)}}$ & 
$\shortstack{\bf{56.1}\\\bf{(3.6)}}$ \\ 
\midrule
\shortstack{PC\\Sigmoid} &
$\shortstack{\bf{94.9}\\\bf{(0.5)}}$ & 
$\shortstack{\bf{90.9}\\\bf{(0.8)}}$ & 
$\shortstack{\bf{88.1}\\\bf{(1.8)}}$ & 
$\shortstack{\bf{80.3}\\\bf{(2.5)}}$ & 
$\shortstack{\bf{75.8}\\\bf{(2.5)}}$ & 
$\shortstack{\bf{72.9}\\\bf{(3.0)}}$ & 
$\shortstack{\bf{65.0}\\\bf{(3.5)}}$ &  
$\shortstack{\bf{58.9}\\\bf{(3.9)}}$ \\ 
\midrule
\shortstack{PC\\Ramp} &
$\shortstack{\bf{94.5}\\\bf{(0.7)}}$ & 
$\shortstack{\bf{90.8}\\\bf{(0.5)}}$ & 
$\shortstack{\bf{88.0}\\\bf{(2.2)}}$ & 
$\shortstack{\bf{81.0}\\\bf{(2.2)}}$ & 
$\shortstack{\bf{74.0}\\\bf{(2.3)}}$ & 
$\shortstack{\bf{71.4}\\\bf{(2.4)}}$ & 
$\shortstack{\bf{69.0}\\\bf{(2.8)}}$ & 
$\shortstack{\bf{57.3}\\\bf{(2.0)}}$ \\ 
\bottomrule
  \end{tabular}
\end{table}
\begin{table}[t]
  \centering
  \caption{Means and standard deviations of classification accuracy over 20 trials in percentage. ``PC/S'' is the proposed method for the pairwise comparison formulation with the sigmoid loss, ``PL'' is the partial label method with the squared hinge loss, and ``ML'' is the multi-label method with the sigmoid loss. Best and equivalent methods (with 5\% t-test) are highlighted in boldface. ``Class'' denotes the class labels used for the experiment and ``Dim'' denotes the dimensionality $d$ of patterns to be classified. ``\# train'' denotes the total number of training and validation samples in each class. ``\# test'' denotes the number of test samples in each class.}
  \label{tb:benchmark_neuralnet}
\small
  \begin{tabular}{ccccc|ccc}
\toprule
Dataset & Class & Dim & \# train & \# test &PC/S & PL & ML\\
\midrule
WAVEFORM1 & 1 $\sim$ 3 & 21 & 1226 & 398 & $\bf{85.8(0.5)}$ & $\bf{85.7(0.9)}$ & $79.3(4.8)$ \\
\midrule
WAVEFORM2 & 1 $\sim$ 3 & 40 & 1227 & 408 & $\bf{84.7(1.3)}$ & $\bf{84.6(0.8)}$ & $74.9(5.2)$ \\
\midrule
SATIMAGE & 1 $\sim$ 7 & 36 & 415 & 211 & $\bf{68.7(5.4)}$ & $60.7(3.7)$ & $33.6(6.2) $ \\
\midrule
\multirow{5}{*}{PENDIGITS}
& 1 $\sim$ 5 & \multirow{5}{*}{16} & 719 & 336 & $\bf{87.0(2.9)}$ & $76.2(3.3)$ & $44.7(9.6)$ \\
& 6 $\sim$ 10 & & 719 & 335 & $\bf{78.4(4.6)}$ & $71.1(3.3)$ & $38.4(9.6)$ \\
& even \# & & 719 & 336 & $\bf{90.8(2.4)}$ & $76.8(1.6)$ & $43.8(5.1)$ \\
& odd \# & & 719 & 335 & $\bf{76.0(5.4)}$ & $67.4(2.6)$ & $40.2(8.0)$ \\
& 1 $\sim$ 10 & & 719 & 335 & $\bf{38.0(4.3)}$ & $33.2(3.8)$ & $16.1(4.6)$ \\
\midrule
\multirow{5}{*}{DRIVE}
& 1 $\sim$ 5 & \multirow{5}{*}{48} & 3955 & 1326 & $\bf{89.1(4.0)}$ & $77.7(1.5)$ & $31.1(3.5)$ \\
& 6 $\sim$ 10 & & 3923 & 1313 & $\bf{88.8(1.8)}$ & $78.5(2.6)$ & $30.4(7.2)$ \\
& even \# & & 3925 & 1283 & $\bf{81.8(3.4)}$ & $63.9(1.8)$ & $29.7(6.3)$ \\
& odd \# & & 3939 & 1278 & $\bf{85.4(4.2)}$ & $74.9(3.2)$ & $27.6(5.8)$ \\
& 1 $\sim$ 10 & & 3925 & 1269 & $\bf{40.8(4.3)}$ & $32.0(4.1)$ & $12.7(3.1)$ \\
\midrule
\multirow{6}{*}{LETTER}
& 1 $\sim$ 5 & \multirow{6}{*}{16} & 565 & 171 & $\bf{79.7(5.3)}$ & $\bf{75.1(4.4)}$ & $28.3(10.4)$ \\
& 6 $\sim$ 10 & & 550 & 178 & $\bf{76.2(6.2)}$ & $66.8(2.5)$ & $34.0(6.9)$ \\
& 11 $\sim$ 15  & & 556 & 177 & $\bf{78.3(4.1)}$ & $67.4(3.3)$ & $28.6(5.0)$ \\
& 16 $\sim$ 20  & & 550 & 184 & $\bf{77.2(3.2)}$ & $68.4(2.1)$ & $32.7(6.4)$ \\
& 21 $\sim$ 25  & & 585 & 167 & $\bf{80.4(4.2)}$ & $75.1(1.9)$ & $32.0(5.7)$ \\
& 1 $\sim$ 25 & & 550 & 167 & $\bf{5.1(2.1)}$ & $\bf{5.0(1.0)}$ & $\bf{5.2(1.1)}$ \\
\midrule
\multirow{5}{*}{USPS}
& 1 $\sim$ 5 & \multirow{5}{*}{256} & 652 & 166 & $\bf{79.1(3.1)}$ & $70.3(3.2)$ & $44.4(8.9)$ \\
& 6 $\sim$ 10 &  & 542 & 147 & $\bf{69.5(6.5)}$ & $\bf{66.1(2.4)}$ & $37.3(8.8)$ \\
& even \# &  & 556 & 147 & $\bf{67.4(5.4)}$ & $\bf{66.2(2.3)}$ & $35.7(6.6)$ \\
& odd \# &  & 542 & 147 & $\bf{77.5(4.5)}$ & $69.3(3.1)$ & $36.6(7.5)$ \\
& 1 $\sim$ 10 &  & 542 & 127 & $\bf{30.7(4.4)}$ & $26.0(3.5)$ & $13.3(5.4)$ \\
\bottomrule
\end{tabular}
\end{table}

\subsection{Benchmark experiments}
\label{sec:benchmark_experiments}
Next, we compare our proposed method, PC with the sigmoid loss (PC/S), with two baseline methods.  The first baseline is one of the state-of-the-art partial label (PL) methods \cite{partial} with the squared hinge loss\footnote{We decided to use the squared hinge loss (which is convex) here since it was reported to work well in the original paper \cite{partial}.}:
\begin{align*}
\ell\big(z\big) = (\max(0,1-z))^2.
\end{align*}
The second baseline is a multi-label (ML) method \cite{multilabel2}, where every complementary label $\overline{y}$ is translated into a negative label for class $\overline{y}$ and positive labels for the other $K-1$ classes.  This yields the following loss:
\begin{align*}
\mathcal{L}_{\mathrm{ML}}(f(\bmx), \overline{y}) = \sum_{y\neq\overline{y}}\ell\big(g_y(\bmx)\big) + \ell\big(-g_{\overline{y}}(\bmx)\big),
\end{align*}
where we used the same sigmoid loss as the proposed method for $\ell$.  We used a one-hidden-layer neural network ($d$-$3$-$1$) with \emph{rectified linear units} (ReLU) \cite{relu} as activation functions, and weight decay candidates were chosen from $\{10^{-7}, 10^{-4}, 10^{-1}\}$.  Standardization, validation and optimization details follow the previous experiments.

We evaluated the classification performance on the following benchmark datasets: WAVEFORM1, WAVEFORM2, SATIMAGE, PENDIGITS, DRIVE, LETTER, and USPS.  USPS can be downloaded from the website of the late Sam Roweis\footnote{See \url{http://cs.nyu.edu/~roweis/data.html}.}, and all other datasets can be downloaded from the \emph{UCI machine learning repository}\footnote{See \url{http://archive.ics.uci.edu/ml/}.}.  We tested several different settings of class labels, with equal number of data in each class.

In Table~\ref{tb:benchmark_neuralnet}, we summarized the specification of the datasets and reported the means and standard deviations of the classification accuracy over 10 trials.  From the results, we can see that the proposed method is either comparable to or better than the baseline methods on many of the datasets.

\subsection{Combination of ordinary and complementary labels}
\begin{table}[t]
  \centering
  \caption{Means and standard deviations of classification accuracy over 10 trials in percentage. ``OL'' is the ordinary label method, ``CL'' is the complementary label method, and ``OL \& CL'' is a combination method that uses both ordinarily and complementarily labeled data.  Best and equivalent methods are highlighted in boldface. ``Class'' denotes the class labels used for the experiment and ``Dim'' denotes the dimensionality $d$ of patterns to be classified. \# train denotes the number of ordinarily/complementarily labeled data
for training and validation in each class.  \# test denotes the number of test data in each class.}
  \label{tb:combination}
\small
  \begin{tabular}{ccccc|ccc}
\toprule
\multirow{2}{*}{Dataset} & \multirow{2}{*}{Class} & \multirow{2}{*}{Dim} & \multirow{2}{*}{\# train} & \multirow{2}{*}{\# test} & OL & CL & OL \& CL\\
&&&&&($\alpha=1$)&($\alpha=0$)&($\alpha=\frac{1}{2}$)\\
\midrule
WAVEFORM1 & 1 $\sim$ 3 & 21 & 413/826 & 408 & $85.3(0.8)$ & $86.0(0.4)$ & $\bf{86.9(0.5)}$ \\
\midrule
WAVEFORM2 & 1 $\sim$ 3 & 40 & 411/821 & 411 & $82.7(1.3)$ & $82.0(1.7)$ & $\bf{84.7(0.6)}$ \\
\midrule
SATIMAGE & 1 $\sim$ 7 & 36 & 69/346 & 211 & $74.9(4.9)$ & $70.1(5.6)$ & $\bf{81.2(1.1)}$ \\
\midrule
\multirow{5}{*}{PENDIGITS}
& 1 $\sim$ 5 & \multirow{5}{*}{16} & 144/575 & 336 & $\bf{91.3(2.1)}$ & $84.7(3.2)$ & $\bf{93.1(2.0)}$ \\
& 6 $\sim$ 10 & & 144/575 & 335 & $\bf{86.3(3.5)}$ & $78.3(6.2)$ & $\bf{87.8(2.8)}$ \\
& even \# & & 144/575 & 336 & $94.3(1.7)$ & $91.0(4.3)$ & $\bf{95.8(0.6)}$ \\
& odd \# & & 144/575 & 335 & $\bf{85.6(2.0)}$ & $75.9(3.1)$ & $\bf{86.9(1.1)}$ \\
& 1 $\sim$ 10 & & 72/647 & 335 & $61.7(4.3)$ & $41.1(5.7)$ & $\bf{66.9(2.0)}$ \\
\midrule
\multirow{5}{*}{DRIVE}
& 1 $\sim$ 5 & \multirow{5}{*}{48} & 780/3121 & 1305 & $92.1(2.6)$ & $89.0(2.1)$ & $\bf{94.2(1.0)}$ \\
& 6 $\sim$ 10 & & 795/3180 & 1290 & $\bf{87.0(3.0)}$ & $86.5(3.1)$ & $\bf{89.5(2.1})$ \\
& even \# & & 657/3284 & 1314 & $\bf{91.4(2.9)}$ & $81.8(4.6)$ & $\bf{91.8(3.3)}$ \\
& odd \# & & 790/3161 & 1255 & $91.1(1.5)$ & $86.7(2.9)$ & $\bf{93.4(0.5)}$ \\
& 1 $\sim$ 10 & & 397/3570 & 1292 & $\bf{75.2(2.8)}$ & $40.5(7.2)$ & $\bf{77.6(2.2)}$ \\
\midrule
\multirow{6}{*}{LETTER}
& 1 $\sim$ 5 & \multirow{6}{*}{16} & 113/452 & 171 & $85.2(1.3)$ & $77.2(6.1)$ & $\bf{89.5(1.6)}$ \\
& 6 $\sim$ 10 &  & 110/440 & 178 & $81.0(1.7)$ & $77.6(3.7)$ & $\bf{84.6(1.0)}$ \\
& 11 $\sim$ 15  &  & 111/445 & 177 & $81.1(2.7)$ & $76.0(3.2)$ & $\bf{87.3(1.6)}$ \\
& 16 $\sim$ 20  &  & 110/440 & 184 & $81.3(1.8)$ & $77.9(3.1)$ & $\bf{84.7(2.0)}$ \\
& 21 $\sim$ 25  &  & 117/468 & 167 & $86.8(2.7)$ & $81.2(3.4)$ & $\bf{91.1(1.0)}$ \\
& 1 $\sim$ 25 &  & 22/528 & 167 & $11.9(1.7)$ & $6.5(1.7)$ & $\bf{31.0(1.7)}$ \\
\midrule
\multirow{5}{*}{USPS}
& 1 $\sim$ 5 & \multirow{5}{*}{256} & 130/522 & 166 & $83.8(1.7)$ & $76.5(5.3)$ & $\bf{89.5(1.3)}$ \\
& 6 $\sim$ 10 &  & 108/434 & 147 & $79.2(2.1)$ & $67.6(4.3)$ & $\bf{85.5(2.4)}$ \\
& even \# &  & 108/434 & 166 & $79.6(2.7)$ & $67.4(4.4)$ & $\bf{84.8(1.4)}$ \\
& odd \# &  & 111/445 & 147 & $82.7(1.9)$ & $72.9(6.2)$ & $\bf{87.3(2.2)}$ \\
& 1 $\sim$ 10 &  & 54/488 & 147 & $43.7(2.6)$ & $28.5(3.6)$ & $\bf{59.3(2.2)}$ \\
\bottomrule
\end{tabular}
\end{table}
Finally, we demonstrate the usefulness of combining ordinarily and complementarily labeled data.  We used \eqref{empirical-combination}, with hyperparameter $\alpha$ fixed at $1/2$ for simplicity.  We divided our training dataset by $1:(K-1)$ ratio, where one subset was labeled ordinarily while the other was labeled complementarily\footnote{We used $K-1$ times more complementarily labeled data than ordinarily labeled data since a single ordinary label corresponds to $(K-1)$ complementary labels.}.  From the training dataset, we left out 25\% of the data for validating hyperparameters based on the zero-one loss version of \eqref{empirical-combination}. Other details such as standardization, the model and optimization, and weight-decay candidates follow the previous experiments.

We compared three methods: the ordinary label (OL) method corresponding to $\alpha=1$, the complementary label (CL) method corresponding to $\alpha=0$, and the combination (OL \& CL) method with $\alpha=1/2$.
The PC and sigmoid losses were commonly used for all methods.

We reported the means and standard deviations of the classification accuracy over 10 trials in Table~\ref{tb:combination}.  From the results, we can see that OL \& CL tends to
outperform OL and CL, demonstrating the usefulnesses of combining ordinarily and complementarily labeled data.

\section{Conclusions}
We proposed a novel problem setting called \emph{learning from complementary labels}, and showed that an unbiased estimator to the classification risk can be obtained only from complementarily labeled data, if the loss function satisfies a certain symmetric condition.
Our risk estimator can easily be minimized by any stochastic optimization algorithms such as Adam \cite{adam}, allowing large-scale training.
We theoretically established estimation error bounds for the proposed method, and proved that the proposed method achieves the optimal parametric rate.
We further showed that our proposed complementary classification can be easily combined with ordinary classification.
Finally, we experimentally demonstrated the usefulness of the proposed methods.

The formulation of learning from complementary labels may also be useful in the context of \emph{privacy-aware machine learning} \cite{privacy}:
a subject needs to answer private questions such as psychological counseling
which can make him/her hesitate to answer directly.
In such a situation, providing a complementary label, i.e.,
one of the incorrect answers to the question, would be mentally less demanding.
We will investigate this issue in the future.

It is noteworthy that the symmetric condition \eqref{symmetric-loss}, which the loss
should satisfy in our complementary classification framework,
also appears in other weakly supervised learning formulations, e.g.,
in positive-unlabeled learning \cite{punips}.
It would be interesting to more closely investigate the role of this
symmetric condition to gain further insight into these different
weakly supervised learning problems.

\subsubsection*{Acknowledgements}
GN and MS were supported by JST CREST JPMJCR1403.
We thank Ikko Yamane for the helpful discussions.

\bibliographystyle{plain}
{\small\bibliography{learning_from_complementary_labels.bbl}}

\clearpage
\appendix

\section{Proof of Theorem \ref{theorem:symmetry}}
\label{sec:proof_symmetry}
From Eq.\eqref{symmetric-loss}, we have
\begin{align}
\sum_{\overline{y}=1}^K\overline{\mathcal{L}}_{\mathrm{OVA}}(f(\bmx), \overline{y})
&= \frac{1}{K-1}\sum_{\overline{y}=1}^K\sum_{y\neq \overline{y}}\ell \big(g_y(\bmx)\big)
+\sum_{\overline{y}=1}^K\ell\big(-g_{\overline{y}}(\bmx)\big)\nonumber\\
&=
\sum_{\overline{y}=1}^K\left(\ell\big(g_{\overline{y}}(\bmx)\big)
+\ell\big(-g_{\overline{y}}(\bmx)\big)\right)\nonumber
=K.\nonumber
\end{align}
\begin{align}
\mathcal{L}_{\mathrm{OVA}}(f(\bmx),y)+\overline{\mathcal{L}}_{\mathrm{OVA}}(f(\bmx),y)
&
=\ell\big(g_y(\bmx)\big) + \frac{1}{K-1}\sum_{\overline{y}\neq y}\ell \big(-g_{\overline{y}}(\bmx)\big)\nonumber\\
&
\phantom{=}+\frac{1}{K-1}\sum_{y'\neq y}\ell \big(g_{y'}(\bmx)\big)+\ell\big(-g_y(\bmx)\big)=2,\nonumber
\end{align}
\begin{align}
\sum_{\overline{y}=1}^K\overline{\mathcal{L}}_{\mathrm{PC}}\big(f(\bmx), \overline{y}\big)
&
= \sum_{\overline{y}=1}^K\sum_{y\neq\overline{y}}\ell\big(g_y(\bmx)-g_{\overline{y}}(\bmx)\big)
\nonumber\\
&
= \sum_{\overline{y}=1}^{K-1}\sum_{y=\overline{y}+1}^K
\Big(\ell\big(g_y(\bmx)-g_{\overline{y}}(\bmx)\big)
+\ell\big(g_{\overline{y}}(\bmx)-g_y(\bmx)\big)\Big)
=\frac{K(K-1)}{2},\nonumber
\end{align}
\begin{align}
\mathcal{L}_{\mathrm{PC}}(f(\bmx),y)+\overline{\mathcal{L}}_{\mathrm{PC}}(f(\bmx),y)
&
=\sum_{y'\neq y}\ell\big(g_y(\bmx)-g_{y'}(\bmx)\big)
+\sum_{y'\neq y}\ell\big(g_{y'}(\bmx)-g_y(\bmx)\big)
=K-1.\nonumber
\end{align}\qed

\section{Proof of Lemma~\ref{thm:Rade-OVA}}
\label{sec:proof_lemma3}
By definition, $h(\bmx_i,\overline{y}_i)=\widetilde{\cL}_\OVA(f(\bmx_i),\overline{y}_i)$ so that
\begin{align*}
\overline{\fR}_n(\cH_\OVA)
&= \bE_\cS\bE_\sigma \left[\sup_{g_1,\ldots,g_K\in\cG}
\frac{1}{n}\sum_{(\bmx_i,\overline{y}_i)\in\cS}\sigma_i\left(\frac{1}{K-1}
\sum_{y\neq \overline{y}}\widetilde{\ell}(g_y(\bmx_i))+\widetilde{\ell}(-g_{\overline{y}_i}(\bmx_i))\right)\right].
\end{align*}
After rewriting $\widetilde{\cL}_\OVA(f(\bmx_i),\overline{y}_i)$, we can know that
\begin{align*}
\widetilde{\cL}_\OVA(f(\bmx_i),\overline{y}_i)
= \frac{1}{K-1}\sum_y\widetilde{\ell}(g_y(\bmx_i))
+\frac{K-2}{K-1}\widetilde{\ell}(-g_{\overline{y}_i}(\bmx_i)),
\end{align*}
and subsequently,
\begin{align*}
\overline{\fR}_n(\cH_\OVA)
&\le \frac{1}{K-1}\bE_\cS\bE_\sigma \left[\sup_{g_1,\ldots,g_K\in\cG}
\frac{1}{n}\sum_{(\bmx_i,\overline{y}_i)\in\cS}\sigma_i\sum_y\widetilde{\ell}(g_y(\bmx_i))\right]\\
&\quad +\frac{K-2}{K-1}\bE_\cS\bE_\sigma \left[\sup_{g_1,\ldots,g_K\in\cG}
\frac{1}{n}\sum_{(\bmx_i,\overline{y}_i)\in\cS}\sigma_i\widetilde{\ell}(-g_{\overline{y}_i}(\bmx_i))\right]
\end{align*}
due to the sub-additivity of the supremum.

The first term is independent of $\overline{y}_i$ and thus
\begin{align*}
\bE_\cS\bE_\sigma \left[\sup_{g_1,\ldots,g_K\in\cG}
\frac{1}{n}\sum_{(\bmx_i,\overline{y}_i)\in\cS}\sigma_i\sum_y\widetilde{\ell}(g_y(\bmx_i))\right]
&= \bE_{\overline{\cX}}\bE_\sigma \left[\sup_{g_1,\ldots,g_K\in\cG}
\frac{1}{n}\sum_{\bmx_i\in\overline{\cX}}\sigma_i\sum_y\widetilde{\ell}(g_y(\bmx_i))\right]\\
&\le \sum_y\bE_{\overline{\cX}}\bE_\sigma \left[\sup_{g_1,\ldots,g_K\in\cG}
\frac{1}{n}\sum_{\bmx_i\in\overline{\cX}}\sigma_i\widetilde{\ell}(g_y(\bmx_i))\right]\\
&= \sum_y\bE_{\overline{\cX}}\bE_\sigma \left[\sup_{g_y\in\cG}
\frac{1}{n}\sum_{\bmx_i\in\overline{\cX}}\sigma_i\widetilde{\ell}(g_y(\bmx_i))\right]\\
&= K\overline{\fR}_n(\widetilde{\ell}\circ\cG)
\end{align*}
which means the first term can be bounded by $K/(K-1)\cdot\overline{\fR}_n(\widetilde{\ell}\circ\cG)$. The second term is more involved. Let $I(\cdot)$ be the indicator function and $\alpha_i=2I(y=\overline{y}_i)-1$, then
\begin{align*}
&\hskip-1em \bE_\cS\bE_\sigma \left[\sup_{g_1,\ldots,g_K\in\cG}
\frac{1}{n}\sum_{(\bmx_i,\overline{y}_i)\in\cS}\sigma_i\widetilde{\ell}(-g_{\overline{y}_i}(\bmx_i))\right]\\
&= \bE_\cS\bE_\sigma \left[\sup_{g_1,\ldots,g_K\in\cG}\frac{1}{n}\sum_{(\bmx_i,\overline{y}_i)\in\cS}
\sigma_i\sum_y\widetilde{\ell}(-g_y(\bmx_i))I(y=\overline{y}_i)\right]\\
&= \bE_\cS\bE_\sigma \left[\sup_{g_1,\ldots,g_K\in\cG}\frac{1}{2n}\sum_{(\bmx_i,\overline{y}_i)\in\cS}
\sigma_i\sum_y\widetilde{\ell}(-g_y(\bmx_i))(\alpha_i+1)\right]\\
&\le \bE_\cS\bE_\sigma \left[\sup_{g_1,\ldots,g_K\in\cG}\frac{1}{2n}\sum_{(\bmx_i,\overline{y}_i)\in\cS}
\alpha_i\sigma_i\sum_y\widetilde{\ell}(-g_y(\bmx_i))\right]\\
&\quad +\bE_\cS\bE_\sigma \left[\sup_{g_1,\ldots,g_K\in\cG}\frac{1}{2n}\sum_{(\bmx_i,\overline{y}_i)\in\cS}
\sigma_i\sum_y\widetilde{\ell}(-g_y(\bmx_i))\right]\\
&= \bE_\cS\bE_\sigma \left[\sup_{g_1,\ldots,g_K\in\cG}\frac{1}{n}\sum_{(\bmx_i,\overline{y}_i)\in\cS}
\sigma_i\sum_y\widetilde{\ell}(-g_y(\bmx_i))\right],
\end{align*}
where we used that $\alpha_i\sigma_i$ has exactly the same distribution as $\sigma_i$. This can be similarly bounded by $\overline{\fR}_n(\widetilde{\ell}\circ\cG)$ and the second term can be bounded by $K(K-2)/(K-1)\cdot\overline{\fR}_n(\widetilde{\ell}\circ\cG)$.

As a result,
\begin{align*}
\overline{\fR}_n(\cH_\OVA)
&\le \left(\frac{K}{K-1}+\frac{K(K-2)}{K-1}\right)\overline{\fR}_n(\widetilde{\ell}\circ\cG)\\
&= K\overline{\fR}_n(\widetilde{\ell}\circ\cG)\\
&\le KL_\ell\overline{\fR}_n(\cG)\\
&= KL_\ell\fR_n(\cG),
\end{align*}
according to Talagrand's contraction lemma \citep{ledoux91PBS}. \qed

\section{Proof of Lemma~\ref{thm:Rade-PC}}
\label{sec:proof_lemma4}
By definition,
\begin{align*}
\overline{\fR}_n(\cH_\PC)
&= \bE_\cS\bE_\sigma \left[\sup_{g_1,\ldots,g_K\in\cG}
\frac{1}{n}\sum_{(\bmx_i,\overline{y}_i)\in\cS}\sigma_i\left(
\sum_{y'\neq\overline{y}_i}\widetilde{\ell}(g_{y'}(\bmx_i)-g_{\overline{y}_i}(\bmx_i))\right)\right].
\end{align*}
Using the proof technique for handling the second term in the proof of Lemma~\ref{thm:Rade-OVA}, we have
\begin{align*}
\overline{\fR}_n(\cH_\PC)
&\le \bE_\cS\bE_\sigma \left[\sup_{g_1,\ldots,g_K\in\cG}
\frac{1}{n}\sum_{(\bmx_i,\overline{y}_i)\in\cS}\sigma_i\sum_y\left(
\sum_{y'\neq y}\widetilde{\ell}(g_{y'}(\bmx_i)-g_y(\bmx_i))\right)\right]\\
&= \bE_{\overline{\cX}}\bE_\sigma \left[\sup_{g_1,\ldots,g_K\in\cG}
\frac{1}{n}\sum_{\bmx_i\in\overline{\cX}}\sigma_i\sum_y\left(
\sum_{y'\neq y}\widetilde{\ell}(g_{y'}(\bmx_i)-g_y(\bmx_i))\right)\right]\\
&\le \sum_y\sum_{y'\neq y}\bE_{\overline{\cX}}\bE_\sigma\left[
\sup_{g_y,g_{y'}\in\cG}\frac{1}{n}\sum_{\bmx_i\in\overline{\cX}}
\sigma_i\widetilde{\ell}(g_{y'}(\bmx_i)-g_y(\bmx_i))\right],
\end{align*}
due to the sub-additivity of the supremum.

Let
\begin{align*}
\cG_{y,y'} = \{\bmx\mapsto g_{y'}(\bmx)-g_y(\bmx) \mid g_y,g_{y'}\in\cG\},
\end{align*}
then according to Talagrand's contraction lemma \citep{ledoux91PBS},
\begin{align*}
&\hskip-1em \bE_{\overline{\cX}}\bE_\sigma\left[
\sup_{g_y,g_{y'}\in\cG}\frac{1}{n}\sum_{\bmx_i\in\overline{\cX}}
\sigma_i\widetilde{\ell}(g_{y'}(\bmx_i)-g_y(\bmx_i))\right]\\
&= \overline{\fR}_n(\widetilde{\ell}\circ\cG_{y,y'})\\
&\le L_\ell\overline{\fR}_n(\cG_{y,y'})\\
&= L_\ell\bE_{\overline{\cX}}\bE_\sigma\left[
\sup_{g_y,g_{y'}\in\cG}\frac{1}{n}\sum_{\bmx_i\in\overline{\cX}}
\sigma_i(g_{y'}(\bmx_i)-g_y(\bmx_i))\right]\\
&\le L_\ell\bE_{\overline{\cX}}\bE_\sigma\left[\sup_{g_y\in\cG}
\frac{1}{n}\sum_{\bmx_i\in\overline{\cX}}\sigma_ig_y(\bmx_i)\right]
+L_\ell\bE_{\overline{\cX}}\bE_\sigma\left[\sup_{g_{y'}\in\cG}
\frac{1}{n}\sum_{\bmx_i\in\overline{\cX}}\sigma_ig_{y'}(\bmx_i)\right]\\
&= 2L_\ell\overline{\fR}_n(\cG)\\
&= 2L_\ell\fR_n(\cG).
\end{align*}
This proves that $\overline{\fR}_n(\cH_\PC)\le2K(K-1)L_\ell\fR_n(\cG)$. \qed

\section{Proof of Lemma~\ref{thm:uni-dev}}
\label{sec:proof_lemma5}
We are going to prove the case of $\overline{\cL}_\OVA$; the other case is similar. We consider a single direction $\sup_{g_1,\ldots,g_K\in\cG}(\widehat{R}(f)-R(f))$ with probability at least $1-\delta/2$; the other direction is similar too.

Given the symmetric condition \eqref{symmetric-loss}, it must hold that $\|\overline{\cL}_\OVA\|_\infty=2$ when $g_1,\ldots,g_K$ can be any measurable functions. Let a single $(\bmx_i,\overline{y}_i)$ be replaced with $(\bmx'_i,\overline{y}'_i)$, then the change of $\sup_{g_1,\ldots,g_K\in\cG}(\widehat{R}(f)-R(f))$ is no greater than $2(K-1)/n$. Apply McDiarmid's inequality \citep{mcdiarmid89MBD} to the single-direction uniform deviation $\sup_{g_1,\ldots,g_K\in\cG}(\widehat{R}(f)-R(f))$ to get that
\begin{align*}
\pr\left\{\sup_{g_1,\ldots,g_K\in\cG}(\widehat{R}(f)-R(f))
-\bE\left[\sup_{g_1,\ldots,g_K\in\cG}(\widehat{R}(f)-R(f))\right]
\ge\epsilon\right\}
&\le \exp\left(-\frac{2\epsilon^2}{n(2(K-1)/n)^2}\right),
\end{align*}
or equivalently, with probability at least $1-\delta/2$,
\begin{align*}
\sup_{g_1,\ldots,g_K\in\cG}(\widehat{R}(f)-R(f))
&\le \bE\left[\sup_{g_1,\ldots,g_K\in\cG}(\widehat{R}(f)-R(f))\right]
+(K-1)\sqrt{\frac{2\ln(2/\delta)}{n}}.
\end{align*}
Since $R(f)=\bE[\widehat{R}(f)]$, it is a routine work to show by symmetrization that \citep{mohri12FML}
\begin{align*}
\bE\left[\sup_{g_1,\ldots,g_K\in\cG}(\widehat{R}(f)-R(f))\right]
&\le 2(K-1)\overline{\fR}_n(\cH_\OVA)\\
&\le 2K(K-1)L_\ell\fR_n(\cG),
\end{align*}
where the last line is due to Lemma~\ref{thm:Rade-OVA}. \qed
\end{document}